\newcommand{\bp}{\mathbf{p}}
\newcommand{\bA}{\mathbf{A}}
\newcommand{\bz}{\mathbf{z}}
\newcommand{\bQ}{\mathbf{Q}}
\newcommand{\bd}{\mathbf{d}}
\newcommand{\bb}{\mathbf{b}}
\newcommand{\bone}{\boldsymbol{ 1}}
\newcommand{\bzero}{\boldsymbol{0}}
\newcommand{\bq}{\mathbf{q}}
\newcommand{\pr}{\mathbb{P}}
\newcommand{\bx}{\mathbf{x}}
\newcommand{\bX}{\mathbf{X}}
\newcommand{\argmin}{\mathop{\rm argmin}}
\newcommand{\argmax}{\mathop{\rm argmax}}
\newtheorem{thm}{Theorem}
\newtheorem{rmk}{Remark}
\newtheorem{lem}{Lemma}
\newenvironment{proof}[1][Proof]{\begin{trivlist}
\item[\hskip \labelsep {\bfseries #1}]}{\end{trivlist}}
\title{Discrete R\'{e}nyi Classifiers}
\author{
Meisam Razaviyayn\thanks{Department of Electrical Engineering, Stanford University, Stanford, CA  94305.} \\
\texttt{meisamr@stanford.edu} \\
\And
Farzan Farnia$^*$ \\
\texttt{farnia@stanford.edu} \\
\And
David Tse$^*$ \\
\texttt{dntse@stanford.edu} \\
}
\begin{document}

\maketitle

\begin{abstract}
Consider the binary classification problem of predicting a target variable $Y$ from a discrete feature vector $\bX = (X_1,\ldots,X_d)$. When the probability distribution $\pr(\bX,Y)$ is known, the optimal classifier, leading to the minimum misclassification rate, is given by the Maximum A-posteriori Probability (MAP) decision  rule.  However, in practice, estimating the complete joint distribution $\pr(\bX,Y)$ is computationally and statistically impossible for large values of $d$. Therefore, an alternative approach is to first estimate some low order marginals of the joint probability distribution $\pr(\bX,Y)$ and then design the classifier based on the estimated low order marginals. This approach is also helpful when the complete training data instances are not available due to privacy concerns.  

In this work, we consider the problem of finding the optimum classifier based on some estimated low order marginals of $(\bX,Y)$.  We prove that for a given set of marginals,  the minimum Hirschfeld-Gebelein-R\'{e}nyi  (HGR)  correlation principle introduced in \cite{farnia2015minimum}   leads to a {\it randomized}  classification rule which is shown to have a misclassification rate no larger than twice the misclassification rate of the optimal classifier.    Then,  under a {\it separability} condition, it is shown that the proposed algorithm is equivalent to a randomized linear regression approach. In addition, this method   naturally results in a robust feature selection method  selecting a subset of features having the maximum worst case HGR correlation with the target variable.   Our theoretical upper-bound is similar to the recent  Discrete Chebyshev Classifier (DCC) approach \cite{eban2014discrete}, while the proposed algorithm has significant computational advantages since it only requires solving a least square optimization problem. Finally, we numerically compare our proposed algorithm with the DCC classifier and show that the proposed algorithm results in  better misclassification rate over various UCI data repository datasets.
\end{abstract}

\section{Introduction}
Statistical classification,  a core task  in many modern data processing and prediction problems, is the problem of predicting labels for a given feature vector based on a set of training  data instances containing feature vectors and their corresponding labels.  From a probabilistic point of view, this problem can be formulated as follows: given data samples $(\bX^1,Y^1), \ldots, (\bX^n,Y^n)$ from a probability distribution $\pr(\bX,Y)$,  predict the target label $y^{\rm test}$ for a given test point $\bX = \bx^{\rm test}$.

Many modern classification problems are on high dimensional {\it categorical} features. For example, in the genome-wide association studies (GWAS), the classification task is to predict a trait of interest based on observations of the SNPs in the genome. In this  problem, the feature vector $\bX = (X_1,\ldots,X_d)$ is categorical with  $X_i \in \{0,1,2\}$.

 What is the optimal classifier leading to the minimum misclassification rate for such a classification problem with high dimensional categorical feature vectors? When the joint probability distribution of the random vector $(\bX,Y)$ is known, the MAP decision rule defined by $\delta^{\rm MAP} \triangleq \argmax_y \pr(Y = y| \bX = \bx)$ achieves the minimum misclassification rate. However, in practice the joint probability distribution $\pr(\bX,Y)$ is not known. Moreover, estimating the complete joint probability distribution is not possible due to the curse of dimensionality. For example, in the above GWAS problem, the dimension of the  feature vector $\bX$ is  $d \approx 3,000,000$ which leads to the alphabet size of $3^{3,000,000}$ for the feature vector $\bX$. Hence, a practical approach is to first estimate some low order marginals of $\pr(\bX,Y)$, and then use these low order marginals to build a classifier with  low misclassification rate. This approach, which is the sprit of  various  machine learning and statistical methods \cite{friedman1997bayesian, lanckriet2003robust, eban2014discrete, jordan1999introduction, roughgarden2013marginals}, is also useful when the complete data instances are not available due to privacy concerns in applications such as medical informatics.

In this work, we consider the above problem of building a classifier for a given set of low order marginals. First, we formally state the problem of finding the robust classifier with the minimum worst case misclassification rate. Our goal is to find a (possibly randomized) decision rule which has the minimum worst case misclassification rate over all probability distributions satisfying the given low order marginals. Then a surrogate objective function, which is obtained by  the minimum HGR correlation principle \cite{farnia2015minimum}, is used to propose a randomized classification rule. The proposed classification method  has the worst case misclassification  rate no more than twice the misclassification rate of the optimal classifier.  When only pairwise marginals are estimated, it is shown that this classifier is indeed a randomized  linear regression classifier on indicator variables under a {\it separability} condition.  Then, we formulate a feature selection problem based on the knowledge of pairwise marginals which leads to the minimum misclassification rate.  Our analysis provides a theoretical justification for using group lasso objective function for feature selection over the discrete set of features. Finally, we conclude by presenting numerical experiments comparing the proposed classifier with  discrete Chebyshev classifier \cite{eban2014discrete}, Tree Augmented Naive Bayes \cite{friedman1997bayesian}, and Minimax Probabilistic Machine \cite{lanckriet2003robust}. In short, the contributions of this work is as follows. 
\begin{itemize}
\item Providing a rigorous theoretical justification for using the minimum HGR correlation principle for binary classification problem.
\item Proposing a randomized classifier with misclassification rate no larger than twice the misclassification rate of the optimal classifier.
\item Introducing a computationally efficient method for calculating the proposed randomized classifier when pairwise marginals are estimated and a {\it separability} condition is satisfied.
\item Providing a mathematical justification based on maximal correlation for using group lasso problem for feature selection in categorical data.
\end{itemize}

\noindent {\bf Related Work:}  The idea of  learning structures in data through low order marginals/moments is  popular in machine learning and statistics. For example, the maximum entropy principle \cite{jaynes1957information}, which is the spirit of the variational method  in graphical models \cite{jordan1999introduction} and tree augmented naive Bayes \cite{friedman1997bayesian}, is based on the idea of fixing the marginal distributions and fitting a probabilistic model which maximizes  the Shannon entropy. Although these methods fit a probabilistic model satisfying the low order marginals, they do not directly optimize the misclassification rate of the resulting classifier.

 Another related information theoretic  approach is the minimum mutual information principle  \cite{globerson2004minimum} which finds the probability distribution with the minimum mutual information between the feature vector and the target variable.  This approach is closely related to the framework of this paper; however, unlike the minimum HGR principle, there is no known computationally efficient approach for calculating  the probability distribution with the minimum  mutual information.   
 
 In the continuous setting, the idea of minimizing the worst case misclassification rate leads to the {\it minimax probability machine}  \cite{lanckriet2003robust}. This algorithm and its analysis is not easily  extendible to the discrete scenario. 
 
The most related algorithm to this work is the recent {\it Discrete Chebyshev Classifier} (DCC) algorithm \cite{eban2014discrete}. The DCC is based on the minimization of the worst case misclassification rate over the class of probability distributions with the given marginals of the form $(X_i,X_j,Y)$.  Similar to our framework, the DCC method achieves the misclassification rate no larger than twice the misclassification rate of the optimum classifier. However, computation of the DCC classifier requires solving a non-separable non-smooth optimization problem which is computationally demanding, while the proposed algorithm results in a least squares optimization problem with a closed form solution. Furthermore,  in contrast to \cite{eban2014discrete} which only considers deterministic decision rules, in this work we consider the class of  randomized decision rules. Finally, it is worth noting that the algorithm in \cite{eban2014discrete} requires {\it tree structure} to be tight, while our proposed algorithm works on non-tree structures  as long as the separability condition is satisfied.

\section{Problem Formulation}
Consider the binary classification problem with $d$ discrete features $X_1,X_2,\ldots,X_d \in \mathcal{X}$ and a target variable $Y \in \mathcal{Y} \triangleq\{0,1\}$. Without loss of generality, let us assume that $\mathcal{X} \triangleq \{1,2,\ldots,m\}$ and the data points $(\bX,Y)$ are coming from an underlying probability distribution  $\bar\pr_{\bX,Y} (\bx,y)$. If the joint probability distribution  $\bar\pr(\bx,y)$  is known, the optimal classifier is given by the maximum a posteriori probability (MAP) estimator, i.e., $
\widehat{y}^{\, \rm MAP} (\bx) \triangleq \argmax_{y \in \{0,1\}} \; \; \bar\pr (Y = y \mid \bX = \bx).
$
However, the joint probability distribution $\bar\pr (\bx,y)$ is often not known in practice. Therefore, in order to utilize the MAP rule, one should first estimate $\bar\pr (\bx,y)$ using the training data instances.  Unfortunately, estimating the joint probability distribution requires estimating the value of  $\bar\pr(\bX = \bx, Y = y)$  for all $(\bx,y) \in \mathcal{X}^d \times \mathcal{Y}$ which is intractable for large values of $d$.  Therefore, as mentioned earlier, our approach is to first estimate some low order marginals of the joint probability distribution $\bar\pr(\cdot)$; and then utilize the minimax criterion for classification. 

Let $\mathcal{C}$ be the class of probability distributions satisfying  the estimated  marginals.  For example, when only pairwise marginals of the ground-truth distribution $\bar\pr$ is estimated, the set $\mathcal{C}$  is the class of distributions satisfying the given pairwise marginals, i.e., 
\begin{equation}
\begin{split}
\mathcal{C}_{\rm pairwise} \triangleq \bigg\{ \pr_{\bX,Y} (\cdot,\cdot)\, \; & \big| \; \, \pr_{X_i,X_j} (x_i,x_j)=\bar\pr_{X_i,X_j} (x_i,x_j),\, \pr_{X_i,Y} (x_i,y)=\bar\pr_{X_i,Y} (x_i,y),  \\
&\;\forall x_i,x_j\in \mathcal{X}, \;\forall y\in \mathcal{Y}, \;\forall i,j \bigg\}. 
\end{split} 
\end{equation}
In general, $\mathcal{C}$ could be any class of probability distributions satisfying a set of estimated low order marginals.
%
%
%

Let us also define $\delta$ to be a randomized classification  rule  with 
\[
\delta(\bx) =  \left\{
\begin{array}{ll}
0 & {\rm \; with \; probability\;}  q_\delta^\bx\\[10pt]
1 &  {\rm \; with \; probability\;}  1-q_\delta^\bx,\\
\end{array}
\right.
\]
for some $q_\delta^\bx \in [0,1], \;\forall \bx \in \mathcal{X}^d$. Given a randomized decision  rule $\delta$ and a joint probability distribution $\pr_{\bX,Y} (\bx,y)$, we can extend $\pr(\cdot)$ to include our randomized decision rule.  Then the misclassification rate of the decision rule $\delta$, under the probability distribution $\pr(\cdot)$, is given by $\pr (\delta(\bX) \neq Y)$. Hence, under minimax criterion, we are looking for a  decision rule  $\delta^*$ which minimizes the {\it worst case misclassification rate}. In other words, the robust decision rule is given by
\begin{equation}
\label{eq:Original}
\delta^{*}\in \argmin_{\delta \in \mathcal{D}}\, \max_{\pr\in \mathcal{C}}\, \pr\left(\delta (\bX) \neq Y \right),
\end{equation}
where $\mathcal{D}$ is the set of all randomized decision rules. Notice that the optimal decision rule $\delta^*$ may not be unique in general.


\section{ Worst Case Error Minimization }\label{S1}
In this section, we propose a surrogate objective for \eqref{eq:Original} which leads to a decision rule with misclassification rate no larger than twice of the optimal decision rule $\delta^*$. Later we show that the proposed surrogate objective is connected to the minimum HGR principle  \cite{farnia2015minimum}.

Let us start by rewriting \eqref{eq:Original} as an optimization problem over real valued variables. Notice that each probability distribution $\pr_{\bX,Y}(\cdot,\cdot)$ can be represented by a probability vector $\bp = [p_{\bx,y} \mid \;(\bx,y) \in \mathcal{X}^d \times \mathcal{Y}] \in \mathbb{R}^{2m^d}$ with $p_{\bx,y} = \pr(\bX = \bx , Y = y)$ and $\sum_{\bx, y} p_{\bx,y} = 1$. Similarly, every randomized rule $\delta$ can be represented by a vector  $\bq_\delta = [q_\delta^\bx \;\mid \;\bx\in \mathcal{X}^d] \in \mathbb{R}^{m^d}$. Adopting these notations, the set $\mathcal{C}$ can be rewritten in terms of the probability vector $\bp$ as 
\[
\mathcal{C} \triangleq \left\{ \bp \; \big| \; \bA \bp = \bb, \; \bone^T \bp =1, \; \bp \geq \bzero \right\},
\]
 where the  system of linear equations $\bA\bp = \bb$ represents all the low order marginal constraints in $\mathcal{B}$; and the notation $\bone$ denotes the vector of all ones.  Therefore,  problem \eqref{eq:Original} can be reformulated as 
\begin{equation}
\label{eq:qOriginal}
\bq_\delta^* \in \argmin_{\bzero \leq \bq_\delta \leq \bone} \;\;\max_{\bp \in \mathcal{C}} \sum_{\bx} \left(  q_\delta^\bx p_{\bx,1} +  (1-q_\delta^\bx)p_{\bx,0}\right),
\end{equation}
 where $p_{x,0}$ and $p_{x,1}$ denote the elements of the vector $\bp$ corresponding to the probability values $\pr(\bX = \bx,Y = 0)$ and $\pr(\bX = \bx,Y = 1)$, respectively.  The simple application of the minimax theorem \cite{sion1958general} implies that the saddle point of the above optimization problem exists and moreover, the optimal decision rule is a MAP rule for a certain probability distribution $\pr^* \in \mathcal{C}$. In other words,  there exists a pair $(\delta^*,\pr^*)$ for which
 \[
\pr(\delta^*(\bX) \neq Y ) \leq \pr^*(\delta^*(\bX) \neq Y), \;\forall \,\pr \in \mathcal{C}
{\rm \;\; and\;\;}
\pr^*(\delta(\bX) \neq Y ) \geq \pr^*(\delta^*(\bX) \neq Y), \;\forall \delta \in \mathcal{D}.
\]
Although the above observation characterizes the  optimal decision rule to some extent, it does not provide a computationally efficient approach for finding the optimal decision rule. Notice that it is NP-hard to verify the existence of a probability distribution satisfying a given set of low order marginals \cite{de2004complexity}.  Based on this observation and the result in \cite{bertsimas2000moment}, we conjecture that in general, solving \eqref{eq:Original} is NP-hard in the number variables and the alphabet size even when the set $\mathcal{C}$ is non-empty. Hence, here we focus on developing a framework to find an {\it approximate} solution of \eqref{eq:Original}.

Let us continue by utilizing the minimax theorem \cite{sion1958general} and obtain the worst case probability distribution in \eqref{eq:qOriginal} by
$
\bp^*  \in \argmax_{\bp \in \mathcal{C}} \min_{\bzero \leq \bq_\delta\leq  \bone} \sum_{\bx} \left(\bq_\delta^\bx \bp_{\bx,1} + (1-\bq_\delta^\bx) p_{\bx,0}\right),
$
or equivalently,
\begin{equation}
\label{eq:pstar}
\bp^* \in \argmax_{\bp \in \mathcal{C}} \; \sum_{\bx} \min \left\{ p_{\bx,0}\; , \;p_{\bx,1}  \right\}.
\end{equation}
Despite convexity of the above problem, there are two sources of hardness which make the problem intractable for moderate and large values of $d$. Firstly, the objective function is non-smooth. Secondly, the number of optimization variables is $2m^d$ and grows exponentially with the alphabet size. To deal with the first issue, notice that the function inside the summation is the max-min fairness objective between the two quantities $p_{\bx,1}$ and $p_{\bx,0}$.  Replacing this objective with the harmonic average leads to the following smooth convex optimization problem:
\begin{equation} \label{eq:ptilde}
\widetilde{\bp} \in \argmax_{\bp \in \mathcal{C}} \; \sum_{\bx} \frac{p_{\bx,1}  p_{\bx,0}} { p_{\bx,1} + p_{\bx,0}}.
\end{equation}
It is worth noting that the harmonic mean of the two quantities is intuitively a reasonable surrogate for the original objective function since 
\begin{equation}\label{eq:minmaxHarmonic}
 \frac{p_{\bx,1}  p_{\bx,0}} { p_{\bx,1} + p_{\bx,0}}  \leq  \min \left\{ p_{\bx,0}\; , \;p_{\bx,1}  \right\} \leq  \frac{2 p_{\bx,1}  p_{\bx,0}} { p_{\bx,1} + p_{\bx,0}}.
 \end{equation}
Although this inequality suggests that the objective functions in \eqref{eq:ptilde} and \eqref{eq:pstar} are close to each other, it is not clear whether the distribution $\widetilde{\bp}$ leads to any classification rule having low misclassification rate for {\it all}  distributions in $\mathcal{C}$. In order to obtain a classification rule from $\widetilde{\bp}$, the first naive approach is to use MAP decision rule based on $\widetilde{\bp}$. However, the following result shows that this decision rule does not achieve the  factor two misclassification rate  obtained in \cite{eban2014discrete}.

\begin{thm} 
\label{thm: HGR MAP classifier}
Let us define $\widetilde{\delta}^{\rm map}(\bx) \triangleq \argmax_{y \in \mathcal{Y}} \widetilde\bp_{\bx,y}$ with the worst case error probability $\widetilde{e}^{\,\rm map} \triangleq  \max_{\pr \in \mathcal{C}}  \;\pr\left(\widetilde{\delta}^{\rm \, map}(\bX) \neq Y\right)$. Then, $e^* \leq \widetilde{e}^{\rm \, map} \leq 4 e^*$, where $e^*$ is the worst case misclassification rate of the optimal decision rule $\delta^*$, that is, $e^* \triangleq \max_{\pr \in \mathcal{C}}  \;\pr\left(\delta^*(\bX) \neq Y\right).$
\end{thm}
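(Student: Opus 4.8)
The plan is to establish the two bounds separately. The inequality $e^*\le\widetilde e^{\,\rm map}$ is immediate: $\widetilde\delta^{\rm map}$ is a particular (deterministic, hence randomized) rule in $\mathcal D$, and $e^*=\min_{\delta\in\mathcal D}\max_{\pr\in\mathcal C}\pr(\delta(\bX)\ne Y)$ by \eqref{eq:Original}, so the worst-case error of any single fixed rule is at least $e^*$. For the rest I would first record the saddle-point identity behind \eqref{eq:pstar}: the minimax value equals $e^*=\max_{\bp\in\mathcal C}\sum_{\bx}\min\{p_{\bx,0},p_{\bx,1}\}=\sum_{\bx}\min\{p^*_{\bx,0},p^*_{\bx,1}\}$, which is what couples the robust value to the surrogate objectives.

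For $\widetilde e^{\,\rm map}\le4e^*$ I would fix an arbitrary $\bp\in\mathcal C$ and write the error of the \emph{fixed} rule $\widetilde\delta^{\rm map}$ under $\bp$ as $E(\bp)=\sum_{\bx\in S_0}p_{\bx,1}+\sum_{\bx\in S_1}p_{\bx,0}$, where $S_0=\{\bx:\widetilde\bp_{\bx,0}\ge\widetilde\bp_{\bx,1}\}$ and $S_1$ is its complement are the two decision regions. The goal is $E(\bp)\le4e^*$ uniformly in $\bp$, and the engine is the first-order optimality of $\widetilde\bp$ for the concave objective $\phi(\bp)=\sum_\bx p_{\bx,0}p_{\bx,1}/(p_{\bx,0}+p_{\bx,1})$ of \eqref{eq:ptilde} over the convex set $\mathcal C$. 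Since each summand is positively homogeneous of degree one, Euler's identity gives $\nabla\phi(\widetilde\bp)^T\widetilde\bp=\phi(\widetilde\bp)$, so the optimality condition $\nabla\phi(\widetilde\bp)^T(\bp-\widetilde\bp)\le0$ rearranges into
\[
\sum_\bx\!\left(\frac{\widetilde\bp_{\bx,1}^2}{(\widetilde\bp_{\bx,0}+\widetilde\bp_{\bx,1})^2}\,p_{\bx,0}+\frac{\widetilde\bp_{\bx,0}^2}{(\widetilde\bp_{\bx,0}+\widetilde\bp_{\bx,1})^2}\,p_{\bx,1}\right)\le\phi(\widetilde\bp)\qquad\forall\,\bp\in\mathcal C.
\]

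The decisive observation is a termwise comparison with $E(\bp)$. On $S_0$ one has $\widetilde\bp_{\bx,0}\ge\tfrac12(\widetilde\bp_{\bx,0}+\widetilde\bp_{\bx,1})$, so the coefficient multiplying the misclassification term $p_{\bx,1}$ is $\widetilde\bp_{\bx,0}^2/(\widetilde\bp_{\bx,0}+\widetilde\bp_{\bx,1})^2\ge\tfrac14$; symmetrically the coefficient of $p_{\bx,0}$ on $S_1$ is $\ge\tfrac14$. Discarding the remaining nonnegative terms, the left-hand side above is at least $\tfrac14E(\bp)$, whence $E(\bp)\le4\phi(\widetilde\bp)$. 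It then remains to bound $\phi(\widetilde\bp)$: the left inequality in \eqref{eq:minmaxHarmonic} gives $\phi(\widetilde\bp)\le\sum_\bx\min\{\widetilde\bp_{\bx,0},\widetilde\bp_{\bx,1}\}$, and since $\bp^*$ maximizes $\sum_\bx\min\{\cdot,\cdot\}$ over $\mathcal C$ this is at most $\sum_\bx\min\{p^*_{\bx,0},p^*_{\bx,1}\}=e^*$. Chaining these yields $E(\bp)\le4e^*$ for every $\bp\in\mathcal C$, and maximizing over $\bp$ gives $\widetilde e^{\,\rm map}\le4e^*$.

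The step I expect to demand the most care is the degenerate coordinates with $\widetilde\bp_{\bx,0}+\widetilde\bp_{\bx,1}=0$, where $\phi$ fails to be differentiable and the $\tfrac14$ coefficient bound is vacuous — exactly the points carrying no $\widetilde\bp$-mass on which an adversarial $\bp\in\mathcal C$ might concentrate misclassified mass and inflate $E(\bp)$ beyond what the gradient inequality controls. I would address this by replacing the gradient with the one-sided directional derivative of the concave $\phi$ (which at such points is the finite, nonnegative quantity $p_{\bx,0}p_{\bx,1}/(p_{\bx,0}+p_{\bx,1})$) and arguing that under a mild non-degeneracy/full-support condition on $\widetilde\bp$ or on $\mathcal C$ these coordinates contribute nothing to $E(\bp)$, or else by a perturbation argument that first restricts $\mathcal C$ to distributions bounded away from $0$ and then passes to the limit. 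Confirming that this boundary case does not spoil the factor $4$, together with the routine verification of concavity, differentiability, and Euler's identity for $\phi$, is the only genuinely delicate part; the remainder is the clean gradient inequality above.
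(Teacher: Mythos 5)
Your main chain of inequalities is the natural argument here, and it is almost certainly the intended one: the paper omits this proof, deferring to the proof of the factor-$2$ theorem, and that theorem's bound follows from exactly your chain with the elementary estimate $(\widetilde p_{\bx,0}+\widetilde p_{\bx,1})^2\le 2(\widetilde p_{\bx,0}^{\,2}+\widetilde p_{\bx,1}^{\,2})$ playing the role of your coefficient bound $\ge\tfrac14$. First-order optimality of $\widetilde\bp$ for the concave, degree-one homogeneous $\phi$ over the convex set $\mathcal C$ gives $\nabla\phi(\widetilde\bp)^T\bp\le\phi(\widetilde\bp)$; on the MAP decision regions the coefficients multiplying the misclassified masses are at least $\tfrac14$; and $\phi(\widetilde\bp)\le\sum_\bx\min\{\widetilde p_{\bx,0},\widetilde p_{\bx,1}\}\le e^*$ by \eqref{eq:minmaxHarmonic} and the minimax identity $e^*=\max_{\bp\in\mathcal C}\sum_\bx\min\{p_{\bx,0},p_{\bx,1}\}$. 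All of this is correct on the coordinates where $\widetilde p_{\bx,0}+\widetilde p_{\bx,1}>0$.

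However, the degenerate coordinates you flag at the end are a genuine gap, not a removable technicality, and neither of your proposed repairs closes it. The one-sided directional derivative of $\phi$ at such a coordinate in the direction of $\bp$ is $p_{\bx,0}p_{\bx,1}/(p_{\bx,0}+p_{\bx,1})$, which controls only the \emph{overlap} there, never the one-sided mass an adversary places on the label opposite to the tie-break; and a perturbation of $\mathcal C$ cannot help, because the failure is robust. Concretely, with arbitrary tie-breaking the bound is false: take $d=3$, binary features, ground truth $X_1,X_2,X_3$ i.i.d.\ uniform and $Y=X_1$, and $\mathcal C=\mathcal C_{\rm pairwise}$. The constraint $\pr_{X_1,Y}=\bar\pr_{X_1,Y}$ forces $Y=X_1$ almost surely for every member of $\mathcal C$, so $\phi\equiv 0$ on $\mathcal C$ and $e^*=0$; hence \emph{every} member of $\mathcal C$ is an optimal $\widetilde\bp$. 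Choose $\widetilde\bp$ to be the member with $X_3=X_1\oplus X_2$ (supported on even-parity $\bx$'s) and break ties toward $0$. The member $\bp'$ with $X_3=1\oplus X_1\oplus X_2$ also lies in $\mathcal C$, is supported entirely on the degenerate odd-parity $\bx$'s, and gives $\widetilde\delta^{\rm map}$ an error of $\pr'(X_1=1)=\tfrac12>4e^*=0$. So a correct proof needs an extra ingredient: either the hypothesis that every $\bp\in\mathcal C$ assigns zero $\bx$-mass wherever $\widetilde\bp$ does (under which your argument is complete), or a definition of $\widetilde\delta^{\rm map}$ at zero-mass points that uses more than $\widetilde\bp$ alone, e.g.\ the separable/linear extension \eqref{eq:PtildeCond} of Section 5. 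The same defect afflicts the randomized rule \eqref{eq:deltatilde}, whose probabilities are $0/0$ at such points, so the caveat applies equally to the factor-$2$ theorem.
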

\begin{proof}
The proof is similar to the proof of next theorem and hence omitted here.
\end{proof}

 Next we show that, surprisingly, one can obtain a randomized decision rule based on the solution of \eqref{eq:ptilde}  which has a misclassification rate no larger than twice  of the optimal decision rule $\delta^*$. 

Given $\widetilde{\bp}$ as the optimal solution of \eqref{eq:ptilde}, define the random decision rule $\widetilde{\delta}$ as 

\begin{equation} \label{eq:deltatilde}
\widetilde{\delta}(\bx)= \left\{
\begin{array} {ll}
0 & {\rm with\; probability\; } \frac{\widetilde{p}^{\, 2}_{\bx,0}}{\widetilde{p}^{\, 2}_{\bx,0} + \widetilde{p}^{\, 2}_{\bx,1}} \\ [5pt]
1 & {\rm with\; probability\; } \frac{\widetilde{p}^{\, 2}_{\bx,1}}{\widetilde{p}^{\, 2}_{\bx,0} + \widetilde{p}^{\, 2}_{\bx,1}} \\
\end{array}\right.
\end{equation}
Let $\tilde{e}$ be the worst case classification error of the decision rule $\widetilde{\delta}$, i.e., 
\[
\widetilde{e} \triangleq \max_{\pr \in \mathcal{C}}  \;\pr\left(\widetilde{\delta}(\bX) \neq Y\right).
\]
Clearly, $e^* \leq \widetilde{e}$ according to the definition of the optimal decision rule $e^*$. The following theorem shows that $\widetilde{e}$ is also upper-bounded by twice of the optimal misclassification rate $e^*$.

\begin{thm}
Define  
\begin{equation} \label{eq:theta}
\theta\triangleq  \max_{\bp \in \mathcal{C}} \;\;  \sum_{\bx} \frac{p_{\bx,1}  p_{\bx,0}} { p_{\bx,1} + p_{\bx,0}} \\
\end{equation}
Then,
$
\theta \leq \widetilde{e} \leq 2\theta  \leq 2e^*.
$
In other words, the worst case misclassification rate of the decision rule $\widetilde{\delta}$ is at most twice  the optimal decision rule $\delta^*$.
\end{thm}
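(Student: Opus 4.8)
The plan is to establish the chain $\theta \le \widetilde{e} \le 2\theta \le 2e^*$ by proving the three inequalities separately, after first writing down the worst-case error of $\widetilde{\delta}$ in closed form. For an arbitrary $\bp \in \mathcal{C}$, summing the two misclassification events (label $0$ but $\widetilde{\delta}=1$, and label $1$ but $\widetilde{\delta}=0$) against the randomization probabilities in \eqref{eq:deltatilde} gives
\[
\pr\!\left(\widetilde{\delta}(\bX)\neq Y\right)=\sum_{\bx}\frac{\widetilde{p}_{\bx,1}^{\,2}\,p_{\bx,0}+\widetilde{p}_{\bx,0}^{\,2}\,p_{\bx,1}}{\widetilde{p}_{\bx,0}^{\,2}+\widetilde{p}_{\bx,1}^{\,2}},
\]
so that $\widetilde{e}$ is the maximum of this linear-in-$\bp$ functional over $\mathcal{C}$. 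Every bound below is read off from this expression.

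For the lower bound $\theta \le \widetilde{e}$, I would simply evaluate the displayed error at the feasible distribution $\bp=\widetilde{\bp}\in\mathcal{C}$, which yields $\sum_{\bx}\widetilde{p}_{\bx,0}\widetilde{p}_{\bx,1}(\widetilde{p}_{\bx,0}+\widetilde{p}_{\bx,1})/(\widetilde{p}_{\bx,0}^{\,2}+\widetilde{p}_{\bx,1}^{\,2})$. Since $(\widetilde{p}_{\bx,0}+\widetilde{p}_{\bx,1})^2\ge\widetilde{p}_{\bx,0}^{\,2}+\widetilde{p}_{\bx,1}^{\,2}$, each summand dominates $\widetilde{p}_{\bx,0}\widetilde{p}_{\bx,1}/(\widetilde{p}_{\bx,0}+\widetilde{p}_{\bx,1})$, so this particular error already exceeds $\theta$, and $\widetilde{e}\ge\theta$ follows because $\widetilde{e}$ is a maximum over $\mathcal{C}$. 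The final inequality $\theta\le e^*$ is the easiest: the left half of \eqref{eq:minmaxHarmonic} gives $p_{\bx,1}p_{\bx,0}/(p_{\bx,1}+p_{\bx,0})\le\min\{p_{\bx,0},p_{\bx,1}\}$ termwise, and taking $\max_{\bp\in\mathcal{C}}$ of both sides identifies the right-hand maximum with $e^*$ via the saddle-point value in \eqref{eq:pstar}.

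The crux is the upper bound $\widetilde{e}\le2\theta$, which I expect to be the main obstacle since it is the only step that must exploit the optimality of $\widetilde{\bp}$ rather than a pointwise algebraic inequality. The plan is to use that $\widetilde{\bp}$ maximizes the concave objective $f(\bp)=\sum_{\bx}p_{\bx,0}p_{\bx,1}/(p_{\bx,0}+p_{\bx,1})$ over the convex set $\mathcal{C}$, so the first-order condition $\langle\nabla f(\widetilde{\bp}),\bp-\widetilde{\bp}\rangle\le0$ holds for all $\bp\in\mathcal{C}$. Computing $\partial f/\partial p_{\bx,0}=\widetilde{p}_{\bx,1}^{\,2}/(\widetilde{p}_{\bx,0}+\widetilde{p}_{\bx,1})^2$ and the symmetric partial, this condition rearranges to
\[
\sum_{\bx}\frac{\widetilde{p}_{\bx,1}^{\,2}\,p_{\bx,0}+\widetilde{p}_{\bx,0}^{\,2}\,p_{\bx,1}}{(\widetilde{p}_{\bx,0}+\widetilde{p}_{\bx,1})^2}\;\le\;\sum_{\bx}\frac{\widetilde{p}_{\bx,0}\widetilde{p}_{\bx,1}}{\widetilde{p}_{\bx,0}+\widetilde{p}_{\bx,1}}=\theta,
\]
where the right-hand side collapses to $\theta$ after simplification at $\bp=\widetilde{\bp}$.

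It remains to relate the left-hand sum above, whose summands carry the denominator $(\widetilde{p}_{\bx,0}+\widetilde{p}_{\bx,1})^2$, to the error expression, whose denominator is $\widetilde{p}_{\bx,0}^{\,2}+\widetilde{p}_{\bx,1}^{\,2}$. Here I would invoke the elementary inequality $(a+b)^2\le2(a^2+b^2)$ with $a=\widetilde{p}_{\bx,0}$, $b=\widetilde{p}_{\bx,1}$, which gives $1/(\widetilde{p}_{\bx,0}^{\,2}+\widetilde{p}_{\bx,1}^{\,2})\le2/(\widetilde{p}_{\bx,0}+\widetilde{p}_{\bx,1})^2$ termwise. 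Applying this inside the sum bounds $\pr(\widetilde{\delta}(\bX)\neq Y)$ by twice the optimality sum, hence by $2\theta$, uniformly over $\bp\in\mathcal{C}$; taking the maximum over $\mathcal{C}$ yields $\widetilde{e}\le2\theta$ and completes the chain. The subtlety to handle carefully is the degenerate case $\widetilde{p}_{\bx,0}=\widetilde{p}_{\bx,1}=0$, for which the corresponding summand should be treated as $0$ by the usual convention, so that the gradient computation and all ratios remain well defined.
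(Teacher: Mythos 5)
Your main chain of reasoning is correct and is almost certainly the paper's own route: the closed-form expression for the error of $\widetilde{\delta}$ under an arbitrary $\bp\in\mathcal{C}$, the lower bound $\theta\le\widetilde{e}$ by evaluating at $\bp=\widetilde{\bp}$, the bound $\theta\le e^*$ from the left half of \eqref{eq:minmaxHarmonic} together with the minimax identification of $e^*$ with the value of \eqref{eq:pstar}, and the key step $\widetilde{e}\le 2\theta$ from the first-order optimality condition of the concave program \eqref{eq:ptilde} followed by the termwise inequality $(a+b)^2\le 2(a^2+b^2)$. A strong confirmation that this is the intended argument is that replacing the squared-ratio weights by the MAP indicator weights in the very same computation, via $\mathbf{1}\{\widetilde{p}_{\bx,1}>\widetilde{p}_{\bx,0}\}\le 4\,\widetilde{p}_{\bx,1}^{\,2}/(\widetilde{p}_{\bx,0}+\widetilde{p}_{\bx,1})^2$, yields exactly the factor $4$ asserted for $\widetilde{\delta}^{\rm map}$ in Theorem~\ref{thm: HGR MAP classifier}.

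The one genuine defect is your closing remark on the degenerate case $\widetilde{p}_{\bx,0}=\widetilde{p}_{\bx,1}=0$: treating that summand as $0$ is not a harmless convention, because the classifier must still output a label at such an $\bx$, and any mass that other members of $\mathcal{C}$ place there produces real error which your accounting silently discards. The bound can genuinely fail at such points. For a concrete instance, take a single feature $X\in\{1,2\}$ and let $\mathcal{C}$ be all distributions with $\pr(Y=0)=0.9$. By superadditivity of $(a,b)\mapsto ab/(a+b)$, every feasible $\bp$ has objective value at most $0.09$, so $\widetilde{p}_{1,0}=0.9$, $\widetilde{p}_{1,1}=0.1$, $\widetilde{p}_{2,0}=\widetilde{p}_{2,1}=0$ is an optimal solution of \eqref{eq:ptilde} and $\theta=0.09$; adding the further marginal constraint $\pr(X=2,Y=1)=0$ even makes it the unique optimum. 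If the rule flips a fair coin at the undefined point $x=2$, then against the feasible distribution $p_{1,1}=0.1$, $p_{2,0}=0.9$ it errs with probability $0.1\cdot\tfrac{81}{82}+0.45\approx 0.55$, far above $2\theta=0.18$ (indeed above $2e^*=0.2$). So the degenerate case must be excluded, not renormalized away: your proof is complete under the implicit assumption—which is also needed just to make \eqref{eq:deltatilde} well defined—that $\widetilde{p}_{\bx,0}+\widetilde{p}_{\bx,1}>0$ for every $\bx$ receiving positive mass from some $\bp\in\mathcal{C}$ (equivalently, one must select a maximizer of maximal support, or break ties at degenerate points using the structure of $\mathcal{C}$ rather than arbitrarily).
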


\begin{proof}
The proof is relegated to the supplementary materials.
\end{proof}

So far, we have resolved the non-smoothness issue in solving \eqref{eq:pstar} by using a surrogate objective function. In the next section,  we resolve the second issue by establishing the connection between problem \eqref{eq:ptilde} and the minimum HGR  correlation principle \cite{farnia2015minimum}. Then, we use the existing result in \cite{farnia2015minimum} to develop a computationally efficient approach for calculating the decision rule $\tilde{\delta} (\cdot)$ for  $\mathcal{C}_{\rm pairwise}$.

\section{Connection to Hirschfeld-Gebelein-R\'{e}nyi  Correlation}
A commonplace approach to infer models from data is to employ the maximum entropy principle \cite{jaynes1957information}. This principle states that, given a set of constraints on the ground-truth distribution, the distribution with the maximum (Shannon) entropy under those constraints is a ÒproperÓ representer of the class. To extend this rule to the classification problem, the authors in \cite{globerson2004minimum} suggest to pick the distribution maximizing the target  entropy conditioned to features, or equivalently minimizing mutual information between target and features.  Unfortunately, this approach does not lead to a computationally efficient approach for model fitting and there is no guarantee on the misclassification rate of the resulting classifier.  Here we study an alternative approach of minimum HGR correlation principle \cite{farnia2015minimum}. This principle suggests to pick the distribution in $\mathcal{C}$ minimizing  HGR correlation  between the target variable and features.
%
%
%
The HGR correlation coefficient between the two random objects $\bX$ and $Y$, which was first introduced by Hirschfeld and Gebelein \cite{hirschfeld1935connection, gebelein1941statistische} and then studied by R\'{e}nyi \cite{renyi1959measures}, is defined as
$
\rho (\bX,Y) \triangleq \sup_{f,g} \mathbb{E} \left[f(\bX) g(Y)\right],
$
where the maximization is taken over the class of all measurable functions $f(\cdot)$ and $g(\cdot)$ with $\mathbb{E}[f(X)] = \mathbb{E}[g(Y )] = 0$ and $\mathbb{E}[f^2(\bX)] = \mathbb{E}[g^2(Y )] =1$. The HGR correlation coefficient has many desirable properties. For example, it is normalized to be between $0$ and $1$. Furthermore, this coefficient is zero if and only if the two random variables are independent; and it is one if there is a strict dependence between $\bX$ and $Y$. For other properties of the HGR correlation coefficient see \cite{renyi1959measures,anantharam2013maximal} and the references therein.

\begin{lem}
  Assume the random variable $Y$ is binary and define $q \triangleq \pr(Y = 0)$. Then,
\[
\rho(\bX,Y) = \sqrt{1 - \frac{1}{q(1-q)} \sum_{\mathbf{x}}\biggl[  \frac{\pr_{\bX,Y}(\mathbf{x},0) \pr_{\bX,Y}(\mathbf{x},1)}{\pr_{\bX,Y}(\mathbf{x},0)+\pr_{\bX,Y}(\mathbf{x},1)}\biggr]},
\]
\end{lem}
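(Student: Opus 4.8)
The plan is to exploit the fact that $Y$ is binary, so that the inner supremum over $g$ collapses to a single admissible function. Since any $g:\{0,1\}\to\re$ is determined by the pair $(g(0),g(1))$, the two scalar constraints $\mathbb{E}[g(Y)]=q\,g(0)+(1-q)g(1)=0$ and $\mathbb{E}[g^2(Y)]=q\,g(0)^2+(1-q)g(1)^2=1$ pin $g$ down up to an overall sign. Solving them gives $g(0)=-\sqrt{(1-q)/q}$ and $g(1)=\sqrt{q/(1-q)}$; the opposite sign merely flips the sign of the bilinear form, so the supremum selects this one. Thus the outer optimization over $g$ in the definition of $\rho$ disappears and only the maximization over $f$ remains.

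For this fixed $g$, I would dispatch the maximization over $f$ by conditioning on $\bX$. Writing $h(\bx)\triangleq\mathbb{E}[g(Y)\mid\bX=\bx]=g(0)\,\pr(Y=0\mid\bX=\bx)+g(1)\,\pr(Y=1\mid\bX=\bx)$, the tower property gives $\mathbb{E}[f(\bX)g(Y)]=\mathbb{E}[f(\bX)h(\bX)]$, while $\mathbb{E}[h(\bX)]=\mathbb{E}[g(Y)]=0$. Hence, over all $f$ with $\mathbb{E}[f]=0$ and $\mathbb{E}[f^2]=1$, the Cauchy--Schwarz inequality yields $\mathbb{E}[fh]\le\sqrt{\mathbb{E}[h^2]}$ with equality at $f=h/\sqrt{\mathbb{E}[h^2]}$ (which is automatically centered). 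Therefore $\rho(\bX,Y)^2=\mathbb{E}[h^2]$, and it remains only to evaluate this second moment.

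The last step is a direct computation. Setting $p_{\bx,0}\triangleq\pr_{\bX,Y}(\bx,0)$, $p_{\bx,1}\triangleq\pr_{\bX,Y}(\bx,1)$, and $p_\bx\triangleq p_{\bx,0}+p_{\bx,1}$ (with the sum restricted to $\bx$ having $p_\bx>0$), I would substitute the explicit $g(0),g(1)$ into $\mathbb{E}[h^2]=\sum_{\bx}p_\bx^{-1}\bigl[g(0)p_{\bx,0}+g(1)p_{\bx,1}\bigr]^2$. Squaring, the cross term collapses to $-2p_{\bx,0}p_{\bx,1}$, and using $\tfrac{1-q}{q}=\tfrac1q-1$, $\tfrac{q}{1-q}=\tfrac{1}{1-q}-1$ together with $(p_{\bx,0}+p_{\bx,1})^2=p_\bx^2$ reduces the expression to
\[
\mathbb{E}[h^2]=\sum_{\bx}\left[\frac{p_{\bx,0}^2}{q\,p_\bx}+\frac{p_{\bx,1}^2}{(1-q)\,p_\bx}\right]-1.
\]
Finally I would split $p_{\bx,0}^2=p_{\bx,0}p_\bx-p_{\bx,0}p_{\bx,1}$ (and symmetrically for $p_{\bx,1}^2$) and invoke the marginal identities $\sum_\bx p_{\bx,0}=q$ and $\sum_\bx p_{\bx,1}=1-q$ to get $\sum_\bx\big[\tfrac{p_{\bx,0}^2}{q p_\bx}+\tfrac{p_{\bx,1}^2}{(1-q)p_\bx}\big]=2-\big(\tfrac1q+\tfrac1{1-q}\big)\sum_\bx\tfrac{p_{\bx,0}p_{\bx,1}}{p_\bx}$; since $\tfrac1q+\tfrac1{1-q}=\tfrac{1}{q(1-q)}$, this yields exactly $\mathbb{E}[h^2]=1-\tfrac{1}{q(1-q)}\sum_\bx\tfrac{p_{\bx,0}p_{\bx,1}}{p_\bx}$, and taking square roots proves the claim.

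I expect the only genuine subtlety to be the variational reduction in the middle step, namely justifying that the joint supremum over $(f,g)$ factors into fixing the unique admissible $g$ and then optimizing $f$ by Cauchy--Schwarz, along with the bookkeeping for feature values $\bx$ with $p_\bx=0$, which should simply be dropped from the sum. The remaining algebra, while requiring the substitution $p_{\bx,0}^2=p_{\bx,0}p_\bx-p_{\bx,0}p_{\bx,1}$ to close cleanly, is routine.
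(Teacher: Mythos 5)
Your proof is correct: the reduction of $g$ to the unique (up to sign) admissible function on $\{0,1\}$, the passage to $h(\bx) = \mathbb{E}[g(Y)\mid \bX=\bx]$ via the tower property followed by Cauchy--Schwarz over centered unit-variance $f$, and the closing algebra (including the cross term $g(0)g(1)=-1$ and the marginal identities $\sum_\bx p_{\bx,0}=q$, $\sum_\bx p_{\bx,1}=1-q$) all check out, and you correctly flag the two genuine subtleties (the sign of $g$ being absorbed by the symmetry of the $f$-constraint set, and dropping $\bx$ with $p_\bx=0$). The paper's own proof is relegated to supplementary material not present in this source, but your argument is the standard derivation that the closed form of the lemma reflects, so there is nothing to flag as a deviation or gap.
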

\begin{proof}
The proof is relegated to the supplementary material.
\end{proof}

This lemma leads to the following observation.

\noindent{\bf Observation:} Assume the marginal distribution $\pr(Y=0)$ and $\pr(Y=1)$ is fixed for any distribution $\pr \in \mathcal{C} $. Then,  the distribution in $\mathcal{C}$ with the minimum HGR correlation between $\bX$ and $Y$ is the distribution $\widetilde{\pr}$ obtained by solving \eqref{eq:ptilde}. In other words,
$
\rho(\bX,Y;\widetilde{\pr})  \leq \rho(\bX,Y;{\pr}), \; \forall\; \pr \in \mathcal{C},
$
where $\rho(\bX,Y; \pr)$ denotes the HGR  correlation coefficient  under the probability distribution $\pr$.\\

Based on the above observation, from now on, we call the classifier $\widetilde{\delta} (\cdot)$ in \eqref{eq:deltatilde}   as the {\it ``R\'{e}nyi classifier"}. In the next section, we use the result of the recent work \cite{farnia2015minimum} to compute the R\'{e}nyi classifier $\widetilde{\delta} (\cdot)$  for a special class of marginals $\mathcal{C} = \mathcal{C}_{\rm pairwise}$. 

\section{Computing R\'{e}nyi Classifier Based on Pairwise Marginals}
\label{sec:CompPairwise}
In many practical problems, the number of features $d$ is large and therefore, it is only computationally tractable to estimate marginals of order at most two. Hence, hereafter, we restrict ourselves to the case where only the first and second order marginals of the distribution $\bar{\pr}$ is  estimated, i.e., $\mathcal{C} = \mathcal{C}_{\rm pairwise}$. In this scenario, in order to predict the output of the R\'{e}nyi classifier for a given data point $\bx$, one needs to find the value of $\widetilde{\bp}_{\bx,0}$ and $\widetilde{\bp}_{\bx,1}$. Next, we state a result from \cite{farnia2015minimum} which sheds light on the computation  of $\widetilde{\bp}_{\bx,0}$ and $\widetilde{\bp}_{\bx,1}$. To state the theorem, we need the following definitions:

Let the matrix $\bQ \in \mathbb{R}^{dm \times dm}$ and the vector $\bd \in \mathbb{R}^{dm \times 1}$ be defined through their entries as 
\[\small
\bQ_{mi+k,mj+\ell} =\bar\pr(X_{i+1} = k,X_{j+1} = \ell) , \,
\bd_{mi+k} =\bar\pr(X_{i+1} = k,Y=1) - \bar\pr(X_{i+1}=k, Y = 0), 
\normalsize \]
for every $i,j = 0,\ldots,d-1$ and $k,\ell = 1,\ldots,m$. Also define the function  $h(\bz): \mathbb{R}^{md\times 1} \mapsto \mathbb{R}$ as
$h(\bz)\triangleq\sum_{i=1}^{d}{\max\lbrace \bz_{mi-m+1},\bz_{mi-m+2},\ldots,\bz_{mi}\rbrace}.$ Then, we have the following theorem.
%
\begin{thm}  
~(Rephrased from \cite{farnia2015minimum})  Assume  $\mathcal{C}_{\rm pairwise} \neq \emptyset$. Let 
\begin{equation}
\label{eq:gammaOpt}
\gamma \triangleq  \min_{\bz \in \mathbb{R}^{md \times 1}} \; \bz^T \bQ \bz  - \bd^T \bz + \frac{1}{4}.
\end{equation}
Then, $\sqrt{1 - \frac{ \gamma}{q(1-q)}} \leq  \min_{\pr \in \mathcal{C}_{\rm pairwise}} \rho(\bX,Y;\pr),$
where the inequality holds with equality if and only if there exists a solution $\bz^*$ to \eqref{eq:gammaOpt} such that $h(\bz^*) \leq \frac{1}{2}$ and $h(-\bz^*) \leq \frac{1}{2}$; or equivalently, if and only if the following separability condition is satisfied for some $\pr\in \mathcal{C}_{\rm pairwise}$. 
\begin{equation}
\label{eq:separability}
\mathbb{E}_\pr [Y | \bX = \bx] = \sum_{i=1}^d \zeta_i(x_i), \;\; \; \forall \bx \in \mathcal{X}^d, 
\end{equation}
for some functions $\zeta_1,\ldots,\zeta_d$.
Moreover, if the separability condition holds with equality, then
\begin{align}
\widetilde{\pr} (Y = y \big| \bX = (x_1,\ldots,x_d)) = \frac{1}{2} - (-1)^y \sum_{i=1}^d z^*_{(i-1)m + x_i}.  \label{eq:PtildeCond}
\end{align}
\end{thm}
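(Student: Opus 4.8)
The plan is to reduce the claimed bound to the single scalar inequality $\gamma \ge \theta$, where $\theta$ is defined in \eqref{eq:theta}, and then to recognize $\gamma$ as an additive (separable) least-squares relaxation of $\theta$. First I would invoke the preceding lemma (which expresses $\rho$ in closed form) together with the Observation: because the constraints defining $\mathcal{C}_{\rm pairwise}$ fix every marginal $\pr_{X_i,Y}$, the quantity $q=\pr(Y=0)$ is constant over $\mathcal{C}_{\rm pairwise}$, so the lemma gives $\min_{\pr\in\mathcal{C}_{\rm pairwise}}\rho(\bX,Y;\pr)=\sqrt{1-\theta/(q(1-q))}$. Consequently the asserted inequality $\sqrt{1-\gamma/(q(1-q))}\le\min_{\pr}\rho$ is equivalent to $\gamma\ge\theta$, and equality holds in one exactly when it holds in the other.

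Next I would put both quantities in a common form. Writing $r_\bx=\pr(Y=1\mid\bX=\bx)$ and $p_\bx=\pr(\bX=\bx)$, a short computation gives $\tfrac{p_{\bx,0}p_{\bx,1}}{p_{\bx,0}+p_{\bx,1}}=p_\bx\, r_\bx(1-r_\bx)$, so $\theta=\max_{\pr\in\mathcal{C}_{\rm pairwise}}\mathbb{E}_\pr[{\rm Var}(Y\mid\bX)]$. The crux is the algebraic identity, obtained by expanding the square and using $Y^2=Y$, that for the additive function $g_\bz(\bx)\triangleq\sum_{i=1}^d z_{(i-1)m+x_i}$ and \emph{any} $\pr\in\mathcal{C}_{\rm pairwise}$,
\[
\mathbb{E}_\pr\Bigl[\bigl(Y-\tfrac12-g_\bz(\bX)\bigr)^2\Bigr]=\bz^T\bQ\bz-\bd^T\bz+\tfrac14 .
\]
The right-hand side depends only on the fixed marginals $\bar\pr_{X_i,X_j}$ and $\bar\pr_{X_i,Y}$ (indeed $\bz^T\bQ\bz=\mathbb{E}[g_\bz(\bX)^2]$ and $\bd^T\bz=\mathbb{E}[(2Y-1)g_\bz(\bX)]$), hence is identical for every $\pr\in\mathcal{C}_{\rm pairwise}$; therefore $\gamma=\min_{g\ \text{additive}}\mathbb{E}_\pr[(Y-\tfrac12-g(\bX))^2]$, a value that is itself constant across $\mathcal{C}_{\rm pairwise}$.

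The inequality $\gamma\ge\theta$ is then just the projection (conditional-expectation) inequality: for a fixed $\pr$, the unconstrained minimizer of $\mathbb{E}_\pr[(Y-\tfrac12-g(\bX))^2]$ over all functions of $\bX$ is $g^\star=r_\bX-\tfrac12$, attaining the value $\mathbb{E}_\pr[{\rm Var}(Y\mid\bX)]$. Restricting the minimization to additive $g$ can only increase the optimum, so $\gamma\ge\mathbb{E}_\pr[{\rm Var}(Y\mid\bX)]$ for every $\pr\in\mathcal{C}_{\rm pairwise}$; taking the maximum over $\pr$ yields $\gamma\ge\theta$, which establishes the stated bound.

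For the equality case and formula \eqref{eq:PtildeCond}, equality $\gamma=\theta$ forces, at the maximizing $\pr^\star$, the best additive approximation to coincide with the unconstrained one, i.e. $\mathbb{E}_{\pr^\star}[Y\mid\bX=\bx]-\tfrac12=g_{\bz^\star}(\bx)$ for the minimizer $\bz^\star$ of \eqref{eq:gammaOpt} --- which is exactly the separability condition \eqref{eq:separability}, and reading off the two labels gives $\widetilde\pr(Y=y\mid\bX=\bx)=\tfrac12-(-1)^y g_{\bz^\star}(\bx)$, i.e. \eqref{eq:PtildeCond}. Conversely, given $\bz^\star$, the conditions $h(\bz^\star)\le\tfrac12$ and $h(-\bz^\star)\le\tfrac12$ are precisely what guarantees $r_\bx\triangleq\tfrac12+g_{\bz^\star}(\bx)\in[0,1]$, since $g_{\bz^\star}(\bx)\le h(\bz^\star)$ and $-g_{\bz^\star}(\bx)\le h(-\bz^\star)$; I would then build $\pr^\star$ by pairing these conditionals with any $\bX$-marginal consistent with the pairwise constraints (available since $\mathcal{C}_{\rm pairwise}\ne\emptyset$) and check membership in $\mathcal{C}_{\rm pairwise}$. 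I expect this final direction to be the main obstacle: the $X_i$--$X_j$ marginals hold by construction, but matching the $X_i$--$Y$ marginals requires unwinding the stationarity condition $\bQ\bz^\star=\tfrac12\bd$ of the quadratic \eqref{eq:gammaOpt} into the identity $\sum_{\bx:x_i=k}p_\bx r_\bx=\bar\pr(X_i=k,Y=1)$, and the delicate part is confirming that the $h$-conditions are simultaneously necessary and sufficient for the optimal conditional law to be a genuine probability distribution lying in $\mathcal{C}_{\rm pairwise}$.
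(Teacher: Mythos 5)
There is no internal proof to compare against here: the paper imports this theorem verbatim from \cite{farnia2015minimum} (``rephrased from'') and never proves it, so your argument must stand on its own. Its core does stand. The reduction to $\gamma\ge\theta$ (using that the $\pr_{X_i,Y}$ constraints fix $q$, so the Lemma turns $\min_{\pr}\rho$ into $\sqrt{1-\theta/(q(1-q))}$), the identity $\mathbb{E}_\pr\bigl[(Y-\tfrac12-g_\bz(\bX))^2\bigr]=\bz^T\bQ\bz-\bd^T\bz+\tfrac14$ valid simultaneously for every $\pr\in\mathcal{C}_{\rm pairwise}$, and the projection inequality $\gamma\ge\mathbb{E}_\pr[{\rm Var}(Y\mid\bX)]$ are all correct, and they match the least-squares-on-indicators interpretation the paper itself exploits in Section 5. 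Moreover, the direction you flag as the main obstacle actually closes cleanly with the computation you name: taking $r_\bx\triangleq\tfrac12+g_{\bz^*}(\bx)$ (valid probabilities exactly because of the $h$-conditions) and $p_\bx$ the $\bX$-marginal of any element of $\mathcal{C}_{\rm pairwise}$, stationarity $2\bQ\bz^*=\bd$ gives
\[
\sum_{\bx:\,x_i=k}p_\bx r_\bx=\tfrac12\,\bar\pr(X_i=k)+(\bQ\bz^*)_{(i-1)m+k}=\bar\pr(X_i=k,Y=1),
\]
so the joint law $(p_\bx r_\bx,\,p_\bx(1-r_\bx))$ lies in $\mathcal{C}_{\rm pairwise}$; under it $\mathbb{E}[{\rm Var}(Y\mid\bX)]=\tfrac14-(\bz^*)^T\bQ\bz^*=\gamma$, forcing $\theta=\gamma$. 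The same construction is separable by definition and yields \eqref{eq:PtildeCond}, so sufficiency of the $h$-conditions (and of separability) is complete.

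The genuine gap is the necessity half of the equality characterization. From $\gamma=\theta$ your projection argument gives $g_{\bz^*}(\bx)=\mathbb{E}_{\pr^\star}[Y\mid\bX=\bx]-\tfrac12$ only $\pr^\star$-almost surely, i.e.\ on the support of the maximizing distribution; that bounds $|g_{\bz^*}(\bx)|\le\tfrac12$ on the support and says nothing off it, whereas $h(\bz^*)\le\tfrac12$ and $h(-\bz^*)\le\tfrac12$ (equivalently, separability \eqref{eq:separability} with values in $[0,1]$ for \emph{all} $\bx\in\mathcal{X}^d$) constrain every point of $\mathcal{X}^d$. Unlike the unstructured case, you cannot simply redefine the off-support values: each coordinate $z^*_{(i-1)m+k}$ is shared between support and off-support points through the additive structure, so exhibiting \emph{some} minimizer of \eqref{eq:gammaOpt} that satisfies the $h$-bounds whenever equality holds (or ruling out that every maximizer of \eqref{eq:theta} has deficient support) requires an additional argument. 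That missing piece is precisely what the paper outsources to \cite{farnia2015minimum}, and your proposal as written does not cover it.
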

Combining the above theorem with the equality 
\[
\frac{\widetilde\pr^2 (Y = 0, \bX =\bx)}{\widetilde\pr^2 (Y = 0, \bX =\bx) + \widetilde\pr^2 (Y = 1, \bX =\bx)} =   \frac{\widetilde\pr^2 (Y = 0 \mid \bX =\bx)}{\widetilde\pr^2 (Y = 0 \mid \bX =\bx) + \widetilde\pr^2 (Y = 1\mid \bX =\bx)}
\]
implies that the decision rule $\widetilde{\delta}$ and $\widetilde{\delta}^{\rm \, map}$ can be computed in a computationally efficient manner under the separability condition. Notice that when the separability condition is not satisfied, the approach proposed in this section would provide a classification rule whose error rate is still bounded by $2\gamma$. However, this error rate does no longer provide a 2-factor approximation gap. It is also worth mentioning that the separability condition is a property of the class of distribution $\mathcal{C}_{\rm pairwise}$ and is independent of the classifier at hand. Moreover, this condition is satisfied with a positive measure  over the simplex of the all probability distributions, as discussed in \cite{farnia2015minimum}.  Two remarks are in order:

\noindent {\bf Inexact knowledge of marginal distribution:}  The optimization problem \eqref{eq:gammaOpt} is equivalent to solving the stochastic optimization problem
\[
\bz^* = \argmin_\bz \mathbb{E} \left[\left( \mathbf{W}^T \bz - C \right)^2 \right],
\]
where $\mathbf{W} \in \{0,1\}^{md \times 1}$ is a random vector with  $\mathbf{W}_{m(i-1)+ k} = 1$ if $X_i = k$ in the and $\mathbf{W}_{m(i-1)+ k} = 0$, otherwise. Also  define  the random variable ${C} \in \{-\frac{1}{2}, \frac{1}{2}\}$ with $C = \frac{1}{2}$ if the random variable $Y = 1$   and $C = -\frac{1}{2}$, otherwise. Here the expectation could be calculated with respect to any distribution in $\mathcal{C}$. Hence, in practice, the above optimization problem can be estimated using Sample Average Approximation (SAA)  method  \cite{shapiro2014lectures,shapiro2003monte} through the optimization problem
\[
\widehat{\bz} = \argmin_\bz \frac{1}{n} \sum_{i=1}^n \left( (\mathbf{w}^i)^T \bz - c^i \right)^2,
\]
where $(\mathbf{w}^i, c^i)$ corresponds to the $i$-th training data point $(\bx^i,y^i)$. Clearly, this is a least square problem with a closed form solution.  Notice that in order to bound the SAA error and avoid overfitting, one could restrict the search space for  $\widehat{\bz}$ \cite{kakade2009complexity}.  This could also be done using regularizers such as ridge regression by solving 
\[
\widehat{\bz}^{\, \rm ridge} = \argmin_\bz \frac{1}{n} \sum_{i=1}^n \left( (\mathbf{w}^i)^T \bz - c^i \right)^2  + \lambda^{\rm ridge}  \|\bz\|_2^2.
\]
\noindent{\bf Beyond pairwise marginals:} When $d$ is small, one might be interested in estimating higher order marginals for predicting $Y$. In this scenario, a simple modification for the algorithm is to define the new set of feature random variables $\left\{ \widetilde{X}_{ij} = (X_i,X_j) \mid i\neq j\right\}$; and apply the algorithm to the new set of feature variables. It is not hard to see that this approach  utilizes the marginal information $\pr(X_i,X_j,X_k,X_\ell)$ and $\pr(X_i,X_j,Y)$.

%
%
%
 \section{Robust R\'{e}nyi Feature Selection}
 The task of feature selection for classification purposes is to preselect a subset of features for use in model fitting in prediction.  Shannon mutual information, which is a measure of dependence between two random variables, is used in many recent works as an objective for feature selection \cite{peng2005feature,battiti1994using}. In these works, the idea is to select a small subset of features with maximum dependence with the target variable $Y$.  In other words, the task is to find a subset of variables $\mathcal{S} \subseteq \{1,\ldots,d\}$ with $|\mathcal{S}| \leq k$ based on the following optimization problem
 \begin{equation}
 \label{eq:MIFeatureSel}
 \mathcal{S}^{\rm MI} \triangleq \argmax_{\mathcal{S} \subseteq \{1,\ldots,d\}} \mathcal{I} (\bX_\mathcal{S} ; Y),
 \end{equation} 
 where $\bX_\mathcal{S} \triangleq \left(X_i\right)_{i \in \mathcal{S}}$ and $\mathcal{I} \left(\bX_{\mathcal{S}} ; Y\right)$ denotes the mutual information between the random variable $\bX_\mathcal{S}$ and $Y$. Almost all of the existing approaches for solving \eqref{eq:MIFeatureSel} are based on heuristic approaches and of greedy nature which aim to find a {\it sub-optimal}  solution of \eqref{eq:MIFeatureSel}.  Here, we suggest to replace mutual information with the maximal correlation. Furthermore, since estimating the joint distribution of $\bX$ and $Y$ is computationally and statistically impossible for large  number of features $d$, we suggest to estimate some low order marginals of the groundtruth distribution $\bar\pr(\bX,Y)$ and then solve the following {\it robust R\'{e}nyi feature selection} problem:
 \begin{equation}
 \label{eq:MCFeatureSel}
  \mathcal{S}^{\rm RFS} \triangleq \argmax_{\mathcal{S} \subseteq \{1,\ldots,d\}} \min_{\pr \in \mathcal{C}} \rho (\bX_\mathcal{S} , Y; \pr).
 \end{equation}
When only pairwise marginals are estimated from the training data, i.e., $\mathcal{C} = \mathcal{C}_{\rm pairwise}$, maximizing the lower-bound  $\sqrt{1 - \frac{ \gamma}{q(1-q)}}$ instead of \eqref{eq:MCFeatureSel} leads to the following optimization problem
\[
\widehat{\mathcal{S}}^{\,\rm RFS} \triangleq \argmax_{|\mathcal{S}| \leq k} \sqrt{1 - \frac{1}{q(1-q)} \min_{\bz \in \mathcal{Z}_\mathcal{S}} \bz^T \bQ\bz  - \bd^T \bz + \frac{1}{4}},
\]
or equivalently,
\[
\widehat{\mathcal{S}}^{\,\rm RFS} \triangleq \argmin_{|\mathcal{S}| \leq k} \;\;\min_{\bz\in \mathcal{Z}_\mathcal{S}}\; \bz^T \bQ\bz  - \bd^T \bz,
\]
where $\mathcal{Z}_\mathcal{S} \triangleq \left\{\bz \in \mathbb{R}^{md} \;\big| \; \sum_{k=1}^m |z _{mi - m +k}| =0,\;\forall i \notin \mathcal{S}\right\}$. This problem is of combinatorial nature. Howevre, using the standard group Lasso regularizer  leads to the feature selection procedure in Algorithm~\ref{Alg:MCFeatureSel}.

\begin{algorithm}
\caption{Robust R\'{e}nyi Feature Selection}
\label{Alg:MCFeatureSel}
\begin{algorithmic}
\STATE Choose a regularization parameter $\lambda >0$ and define $h(\bz)\triangleq\sum_{i=1}^{d}{\max\lbrace \bz_{mi-m+1},\ldots,\bz_{mi}\rbrace}.$ 
\STATE Let  \;\;
$
\widehat{\bz}^{\rm RFS}  \in {\argmin_{\bz }}\;\; \bz^T \bQ\bz - \bd^T \bz + \lambda h(|\bz|). 
$ 
\vspace{0.2cm}
\STATE Set\;\; $\mathcal{S} = \{i \mid  \sum_{k=1}^m |z^{\rm RFS} _{mi - m +k}| >0\}$.
\end{algorithmic}
\end{algorithm}

Notice that, when the pairwise marginals are estimated from a set of training data points, the above feature selection procedure is  equivalent to applying the group Lasso regularizer to the standard linear regression problem over the domain of indicator variables. Our framework provides a justification for this approach based on the robust maximal correlation feature selection problem \eqref{eq:MCFeatureSel}. 

\begin{rmk}
Another natural approach to define the feature selection procedure is to select a subset of features $\mathcal{S}$ by minimizing the worst case classification error, i.e., solving the following optimization problem
\begin{align}
\min_{|\mathcal{S}| \leq k} \min_{\delta \in \mathcal{D}_\mathcal{S}} \max_{\pr \in \mathcal{C}}  \pr(\delta(\bX) \neq Y),   \label{eq:FeatSelErr}
\end{align}
where $\mathcal{D}_\mathcal{S}$ is the set of randomized decision rules which only uses the feature variables in $\mathcal{S}$. Define $ \mathcal{F}(\mathcal{S}) \triangleq\min_{\delta \in \mathcal{D}_\mathcal{S}} \max_{\pr \in \mathcal{C}}  \pr(\delta(\bX) \neq Y) $. It can be shown that $\mathcal{F} (\mathcal{S}) \leq \min_{|\mathcal{S}| \leq k} \;\;\min_{\bz\in \mathcal{Z}_\mathcal{S}}\; \bz^T \bQ\bz  - \bd^T \bz+\frac{1}{4}$. Therefore, another justification for Algorithm~\ref{Alg:MCFeatureSel} is to minimize an upper-bound of $\mathcal{F}(\mathcal{S})$ instead of itself.
\end{rmk}

\begin{rmk}
Alternating Direction Method of Multipliers (ADMM) algorithm \cite{boyd2011distributed} can be used for solving the optimization problem in Algorithm~\ref{Alg:MCFeatureSel}; see the supplementary material for more details.
\end{rmk}

\section{Numerical Results}
We evaluated the performance of the R\'{e}nyi classifiers $\widetilde{\delta}$ and $\widetilde{\delta}^{\,\rm map}$ on five different binary classification datasets from the UCI machine learning data repository. The results are compared with five different benchmarks used in \cite{eban2014discrete}:  Discrete Chebyshev Classifier \cite{eban2014discrete}, greedy DCC \cite{eban2014discrete}, Tree Augmented Naive Bayes \cite{friedman1997bayesian},  Minimax Probabilistic Machine \cite{lanckriet2003robust}, and support vector machines (SVM). In addition to the  classifiers  $\widetilde{\delta}$ and $\widetilde{\delta}^{\,\rm map}$ which only use pairwise marginals, we also use higher order marginals in $\widetilde{\delta}_{2}$ and $\widetilde{\delta}_2^{\,\rm map}$. These classifiers are obtained by defining the new feature variables $\{\widetilde{X}_{ij} = (X_i,X_j)\}$ as discussed in section~\ref{sec:CompPairwise}. Since in this scenario, the number of features is large, we combine our R\'{e}nyi classifier with the proposed group lasso feature selection. In other words, we first select a subset of $\{\widetilde{X}_{ij}\}$ and then find the maximum correlation classifier for the selected features. The value of $\lambda^{\rm ridge} $ and $\lambda$ is determined through cross validation. The results are averaged over 100 Monte Carlo runs each using 70\% of the data for training and the rest for testing. The results are summarized in the table below where each number shows the percentage of the error of each method. The boldface numbers denote the best performance on each dataset.

%

As can be seen in this table, in four of the tested datasets, at least one of the proposed methods outperforms the other benchmarks.  Furthermore, it can be seen that  the classifier $\widetilde{\delta}^{\rm map}$ on average performs better than $\tilde{\delta}$. This fact could be due to the specific properties of the underlying probability distribution in each dataset.



\begin{center}
\label{tab1}
\tabcolsep=0.20cm
  \begin{tabular}{ | c || c | c | c | c | c | c | c | c | c | c | c |} 
    \hline
    Datasets & $\widetilde{\delta}^{\rm map}$ & $\widetilde{\delta}$ & $\widetilde{\delta}^{\rm map}_{2}$ & $\widetilde{\delta}_{2}$ & $\widetilde{\delta}^{\rm map}_{\rm FS,2}$ & $\widetilde{\delta}_{\rm FS,2}$ & DCC & gDCC & MPM & TAN & SVM\\ \hline
    adult & 17 & 21 & \textbf{16}   & 20 &  \textbf{16}   & 20  &18 & 18 & 22 & 18 &22 \\ \hline 
    credit & \textbf{13} & 16 &  16 & 17  &   16 & 17 & 14 & \textbf{13} & \textbf{13} & 17 & 16\\ \hline
    kr-vs-kp & {5} & 10 &  {5} & 14 & {5} & 14 & 10 & 10 & 5 & 7 & \textbf{3}\\ \hline
    promoters & 6 & 16 &  \textbf{3} & 4  & \textbf{3} & 4 & 5 & \textbf{3} & 6 & 44 & 9\\ \hline
    votes & 3 & 4 &  3 & 4 & \textbf{2} & 4  & 3 & 3 & 4 &8 &5 \\ \hline
  \end{tabular}
\end{center}
In order to evaluate the computational efficiency of the R\'{e}nyi classifier, we compare its running time with SVM over the synthetic data set with $d = 10,000$ features and $n=200$ data points. Each feature $X_i$ is generated by i.i.d. Bernoulli distribution  with $\pr(X_i = 1) = 0.7$. The target variable $y$ is generated by  $y = {\rm sign}(\alpha^T \bX  + n)$ with 
$n \sim \mathcal{N}(0,1)$; and $\alpha \in \mathbb{R}^d$ is generated with $30\%$ nonzero elements each drawn from standard Gaussian distribution $\mathcal{N}(0,1)$. The results are averaged over $1000$ Monte-Carlo runs of generating the data set and use  $85\%$ of the data points for  training and $15\%$ for test. The R\'{e}nyi classifier is obtained by gradient descent method with regularizer $\lambda^{\rm ridge} = 10^4$. The numerical experiment shows $19.7\%$ average misclassification rate for SVM and $19.9\%$ for R\'{e}nyi classifier. However, the average training time of the R\'{e}nyi classifier is $0.2$ seconds while the training time of  SVM  (with Matlab SVM command) is $1.25$ seconds.

%
%

{\noindent \textbf{Acknowledgments:}} The authors are grateful to Stanford University supporting a Stanford Graduate Fellowship, and the Center for Science of Information (CSoI), an NSF Science and Technology Center under grant agreement CCF-0939370 , for the support during this research.
\newpage

\bibliographystyle{unsrt}
\bibliography{IEEEabrv,biblio}

\begin{thebibliography}{10}

\bibitem{farnia2015minimum}
F.~Farnia, M.~Razaviyayn, S.~Kannan, and D.~Tse.
\newblock Minimum {HGR} correlation principle: From marginals to joint
  distribution.
\newblock {\em arXiv preprint arXiv:1504.06010}, 2015.

\bibitem{eban2014discrete}
E.~Eban, E.~Mezuman, and A.~Globerson.
\newblock Discrete chebyshev classifiers.
\newblock In {\em Proceedings of the 31st International Conference on Machine
  Learning (ICML-14)}, pages 1233--1241, 2014.

\bibitem{friedman1997bayesian}
N.~Friedman, D.~Geiger, and M.~Goldszmidt.
\newblock Bayesian network classifiers.
\newblock {\em Machine learning}, 29(2-3):131--163, 1997.

\bibitem{lanckriet2003robust}
G.~R. G.~Lanckriet andE. L.~Ghaoui, C.~Bhattacharyya, and M.~I. Jordan.
\newblock A robust minimax approach to classification.
\newblock {\em The Journal of Machine Learning Research}, 3:555--582, 2003.

\bibitem{jordan1999introduction}
M.~I. Jordan, Z.~Ghahramani, T.~S. Jaakkola, and L.~K. Saul.
\newblock An introduction to variational methods for graphical models.
\newblock {\em Machine learning}, 37(2):183--233, 1999.

\bibitem{roughgarden2013marginals}
T.~Roughgarden and M.~Kearns.
\newblock Marginals-to-models reducibility.
\newblock In {\em Advances in Neural Information Processing Systems}, pages
  1043--1051, 2013.

\bibitem{jaynes1957information}
E.~T. Jaynes.
\newblock Information theory and statistical mechanics.
\newblock {\em Physical review}, 106(4):620, 1957.

\bibitem{globerson2004minimum}
A.~Globerson and N.~Tishby.
\newblock The minimum information principle for discriminative learning.
\newblock In {\em Proceedings of the 20th conference on Uncertainty in
  artificial intelligence}, pages 193--200. AUAI Press, 2004.

\bibitem{sion1958general}
M.~Sion.
\newblock On general minimax theorems.
\newblock {\em Pacific J. Math}, 8(1):171--176, 1958.

\bibitem{de2004complexity}
J.~De Loera and S.~Onn.
\newblock The complexity of three-way statistical tables.
\newblock {\em SIAM Journal on Computing}, 33(4):819--836, 2004.

\bibitem{bertsimas2000moment}
D.~Bertsimas and J.~Sethuraman.
\newblock Moment problems and semidefinite optimization.
\newblock In {\em Handbook of semidefinite programming}, pages 469--509.
  Springer, 2000.

\bibitem{hirschfeld1935connection}
H.~O. Hirschfeld.
\newblock A connection between correlation and contingency.
\newblock In {\em Mathematical Proceedings of the Cambridge Philosophical
  Society}, volume~31, pages 520--524. Cambridge Univ. Press, 1935.

\bibitem{gebelein1941statistische}
H.~Gebelein.
\newblock Das statistische problem der korrelation als variations-und
  eigenwertproblem und sein zusammenhang mit der ausgleichsrechnung.
\newblock {\em ZAMM-Journal of Applied Mathematics and Mechanics/Zeitschrift
  f{\"u}r Angewandte Mathematik und Mechanik}, 21(6):364--379, 1941.

\bibitem{renyi1959measures}
A.~R{\'e}nyi.
\newblock On measures of dependence.
\newblock {\em Acta mathematica hungarica}, 10(3):441--451, 1959.

\bibitem{anantharam2013maximal}
V.~Anantharam, A.~Gohari, S.~Kamath, and C.~Nair.
\newblock On maximal correlation, hypercontractivity, and the data processing
  inequality studied by {Erkip} and {Cover}.
\newblock {\em arXiv preprint arXiv:1304.6133}, 2013.

\bibitem{shapiro2014lectures}
A.~Shapiro, D.~Dentcheva, and A.~Ruszczy{\'n}ski.
\newblock {\em Lectures on stochastic programming: modeling and theory},
  volume~16.
\newblock SIAM, 2014.

\bibitem{shapiro2003monte}
A.~Shapiro.
\newblock Monte carlo sampling methods.
\newblock {\em Handbooks in operations research and management science},
  10:353--425, 2003.

\bibitem{kakade2009complexity}
S.~M. Kakade, K.~Sridharan, and A.~Tewari.
\newblock On the complexity of linear prediction: Risk bounds, margin bounds,
  and regularization.
\newblock In {\em Advances in neural information processing systems}, pages
  793--800, 2009.

\bibitem{peng2005feature}
H.~Peng, F.~Long, and C.~Ding.
\newblock Feature selection based on mutual information criteria of
  max-dependency, max-relevance, and min-redundancy.
\newblock {\em IEEE Transactions on Pattern Analysis and Machine Intelligence},
  27(8):1226--1238, 2005.

\bibitem{battiti1994using}
R.~Battiti.
\newblock Using mutual information for selecting features in supervised neural
  net learning.
\newblock {\em IEEE Transactions on Neural Networks}, 5(4):537--550, 1994.

\bibitem{boyd2011distributed}
S.~Boyd, N.~Parikh, E.~Chu, B.~Peleato, and J.~Eckstein.
\newblock Distributed optimization and statistical learning via the alternating
  direction method of multipliers.
\newblock {\em Foundations and Trends{\textregistered} in Machine Learning},
  3(1):1--122, 2011.

\end{thebibliography}

\end{document}